\documentclass{article}
\usepackage[table]{xcolor}
\usepackage{iclr2026_conference,times}

\usepackage{amsmath,amsfonts,bm}

\def\eqref#1{equation~\ref{#1}}

\def\plaineqref#1{\ref{#1}}

\def\1{\bm{1}}

\DeclareMathAlphabet{\mathsfit}{\encodingdefault}{\sfdefault}{m}{sl}
\SetMathAlphabet{\mathsfit}{bold}{\encodingdefault}{\sfdefault}{bx}{n}

\newcommand{\E}{\mathbb{E}}

\newcommand{\Var}{\mathrm{Var}}

\usepackage[hypertexnames=false]{hyperref}
\hypersetup{
    colorlinks=true,
    linkcolor=[rgb]{0.10,0.20,0.55},
    citecolor=[rgb]{0.10,0.20,0.55},
    urlcolor=[rgb]{0.10,0.20,0.55}
}
\usepackage{url}
\usepackage{amsmath,amssymb,amsthm}
\usepackage{booktabs}
\definecolor{gacablue}{RGB}{31,84,150}
\definecolor{gacaorange}{RGB}{214,110,32}
\definecolor{gacagray}{RGB}{150,158,170}
\definecolor{gacashade}{gray}{0.925}
\usepackage{algorithm}
\usepackage{algpseudocode}
\usepackage{multirow}
\usepackage{array}
\usepackage{graphicx}
\graphicspath{{figures/}}

\newtheorem{theorem}{Theorem}

\newtheorem{proposition}[theorem]{Proposition}
\theoremstyle{definition}
\newtheorem{assumption}{Assumption}
\theoremstyle{remark}
\newtheorem{remark}{Remark}

\DeclareMathOperator{\clip}{clip}
\providecommand{\E}{\mathbb{E}}
\providecommand{\Var}{\operatorname{Var}}

\newcommand{\Aep}{A^{\mathrm{E}}}                 
\newcommand{\Ast}{A^{\mathrm{S}}}                 
\newcommand{\Afin}{A^{\mathrm{GACA}}}             
\newcommand{\Hstep}{H}                            
\newcommand{\crit}{c}                             
\newcommand{\lammix}{\lambda}                     
\newcommand{\Rtraj}{R}                            
\newcommand{\sd}[1]{{\scriptsize$\pm#1$}}         

\title{
Granularity-Adaptive Credit Assignment for Long-Horizon LLM Agent Reinforcement Learning
}

\author{%
\textbf{Taoran Liang$^{1,2,*}$ \quad
Yang Liu$^{3,*}$ \quad
Shang Luo$^{2}$ \quad
Yingguang Yang$^{2}$} \\
\textbf{Rongrong Zhang$^{4}$ \quad
Yingzong Min$^{4}$ \quad
Yulin Huang$^{3}$ \quad
Jianshen Zhang$^{3}$} \\
\textbf{Yongzhi Qi$^{3}$ \quad
Kefu Xu$^{2}$ \quad
Congjing Ran$^{5}$ \quad
Bin Chong$^{2,\dagger}$} \\[0.8ex]
{\normalfont\small
$^{1}$Nankai University \quad
$^{2}$Peking University \quad
$^{3}$Supply Chain Tech Team Y, JD.com
} \\
{\normalfont\small
$^{4}$Shanghai Waybot Technology Co., Ltd. \quad
$^{5}$Wuhan University
} \\[0.4ex]
{\normalfont\small
$^{*}$Equal contribution
\qquad
$^{\dagger}$Corresponding author
}
}

\iclrfinalcopy

\begin{document}

\maketitle

\begin{abstract}
Reinforcement learning is now the standard way to train large language model agents on long-horizon tasks, where dozens of interdependent actions precede a single sparse reward. Critic-free, group-relative methods such as GRPO suit this regime, but they broadcast one trajectory-level scalar to every step and cannot say which decision drove the outcome. GiGPO recovers a step-level signal by grouping time steps that share an anchor state, yet it merges the step- and episode-level estimates under one fixed weight, spending the same resolution on a pivotal branching decision as on a routine, near-deterministic transition. We argue that the right resolution is state-dependent, and propose GACA, a critic-free estimator whose granularity follows how pivotal each step is. GACA scores every step by the negative log-likelihood its own rollout already records, then blends the two advantages with a per-step weight that grows with that score, so the gradient follows the fine-grained signal where the policy is uncertain and the lower-variance one where it is confident. We prove that a higher anchor-conditional expected NLL certifies a larger lower bound on local action-value variance and hence on the error-optimal step weight, and that state-adaptive mixing strictly dominates every fixed weight once those optima vary. On ALFWorld and WebShop, GACA improves task success over GRPO and GiGPO at both 1.5B and 7B scales.
\end{abstract}

\section{Introduction}
\label{sec:intro}

Large language models are increasingly deployed not as single-turn responders but as \emph{agents} that act in an environment over many steps: navigating a simulated household to complete a chore, browsing a web store to buy a target item, or calling tools to satisfy a user request~\citep{yao2023react,shridhar2021alfworld,yao2022webshop}. Reinforcement learning (RL) is the primary means of improving such agents beyond what imitation provides~\citep{ouyang2022instructgpt,guo2025deepseekr1}. In this setting the agent emits a sequence of textual actions and receives a single sparse reward only once the task has succeeded or failed. Critic-free, group-relative policy-gradient methods such as GRPO~\citep{shao2024deepseekmath} are well suited to it: they replace a learned value function with a group baseline computed from several rollouts of the same task, avoiding the cost and instability of training a critic over long token sequences~\citep{schulman2017ppo,ahmadian2024rloo}. What remains hard is \emph{credit assignment}. When a task spanning tens of steps fails, the learning signal should reach the decisions that caused the outcome instead of spreading evenly over the trajectory~\citep{sutton2018rl,arjona2019rudder}.

Standard GRPO does no such apportionment. It assigns one group-relative scalar to the whole trajectory and broadcasts it to every token, so a decisive early choice and a routine later step receive identical credit. GiGPO~\citep{feng2025gigpo} recovers a \emph{step-level} advantage: it groups time steps that share the same environment state (an ``anchor state'') across the rollouts of a group, computes a group-relative advantage within each such step-group, and combines this step-level signal with the episode-level one at no extra rollout cost. This is a strong baseline, but its granularity is \emph{uniform}: every step gets its own step-level advantage, and the two levels are combined with one fixed weight, however pivotal the step.

Long-horizon trajectories are not uniform in this way. A few steps are real decision points, where the policy is uncertain and the choice steers the outcome; many others are near-deterministic continuations, such as walking toward an already-chosen object or scrolling past irrelevant listings. The step-level advantage of a routine step is mostly estimation noise, and a fixed weight feeds that noise into the gradient at full strength while under-using the fine-grained signal exactly where it would pay off. The same asymmetry has been documented one level down: in RL for LLM reasoning, a small, high-entropy minority of tokens drives most of the policy improvement~\citep{wang2025highentropy,cui2025entropy}. The resolution at which credit should be assigned, we argue, is a property of the state, not a global constant.

Acting on that observation is cheap, because the policy already tells us where it is uncertain. We propose \textbf{GACA} (Granularity-Adaptive Credit Assignment), a critic-free advantage estimator that scores each step by the mean per-token negative log-likelihood (NLL) of the action the rollout sampled, a quantity that is logged anyway, and mixes the step- and episode-level advantages with a per-step coefficient that rises with that score. Granularity is thus varied continuously through a single scalar per step, with no segmentation into discrete regimes, no critic, and no additional environment interaction; switching the modulation off collapses GACA back onto the fixed two-level estimator it generalizes. Our analysis explains when this is the right move: a higher anchor-conditional expected NLL certifies a larger lower bound on how much the group's canonical actions disagree, hence on the local action-value variance, hence on the mean-squared-error-optimal step weight, and adapting that weight by state strictly dominates any fixed choice whenever the per-state optima differ. Empirically, on ALFWorld and WebShop with Qwen2.5 backbones at 1.5B and 7B, GACA improves task success over both GRPO and GiGPO, and an ablation grid in which every variant is trained and selected under a single protocol traces the gain to criticality-adaptive granularity rather than to a better-tuned constant.

\section{Preliminaries}
\label{sec:prelim}

\subsection{Long-horizon LLM agents as a sequential decision process}
\label{sec:mdp}
At step $t$ the agent observes a textual state $s_t$ (the task instruction together with the accumulated interaction history and the current observation) and samples an action $a_t\sim\pi_\theta(\cdot\mid s_t)$, where $a_t=(a_t^{1},\dots,a_t^{|a_t|})$ is a token sequence produced autoregressively. The environment returns $s_{t+1}$ and a reward $r_t$, and the episode terminates after $T$ steps. For the trajectory $\tau=(s_1,a_1,\dots,s_T,a_T)$ the reward is sparse: $r_T$ is a success indicator in $\{0,1\}$ and the intermediate $r_t$ are zero apart from a small penalty charged for an invalid action, so the return $\Rtraj(\tau)=\sum_t r_t$ is at most one and can be negative on a failed trajectory. We write $\hat g_t=\sum_{t'\ge t}\gamma^{t'-t}r_{t'}$ for the discounted step return. Training uses a group-rollout protocol: for each task the agent samples $G$ trajectories, which form the unit of group-relative normalization below. Advantages are assigned per step and shared by the tokens of that step, so with $\rho_{t,j}=\exp\big(\log\pi_\theta(a_t^{j}\mid s_t,a_t^{<j})-\log\pi_{\theta_{\mathrm{old}}}(a_t^{j}\mid s_t,a_t^{<j})\big)$ the token-level importance ratio, the policy is updated with a clipped surrogate,
\begin{equation}
\mathcal{L}_{\mathrm{PG}}(\theta)=-\E_t\!\left[\frac{1}{|a_t|}\sum_{j=1}^{|a_t|}\min\!\big(\rho_{t,j}\Afin_t,\,\clip(\rho_{t,j},1{-}\epsilon,1{+}\epsilon)\Afin_t\big)\right]+\beta_{\mathrm{KL}}\widehat{\mathrm{KL}}\big(\pi_\theta\|\pi_{\mathrm{ref}}\big),
\label{eq:surrogate}
\end{equation}
where $\Afin_t$ is the advantage assigned to step $t$ and broadcast to its tokens, $\epsilon$ is the clip range, and $\beta_{\mathrm{KL}}$ weights a KL penalty to a reference policy. GACA, GRPO, and GiGPO differ only in how $\Afin_t$ is formed; the surrogate is shared.

\subsection{Two levels of critic-free advantage}
\label{sec:twolevel}
GRPO~\citep{shao2024deepseekmath} standardizes returns within a prompt group $g$ in place of a learned value function. Writing $\{\Rtraj_{\tau'}\}_{\tau'\in g}$ for the returns of that group, the \emph{episode-level} advantage of trajectory $\tau$ is
\begin{equation}
\Aep(\tau)=\frac{\Rtraj_\tau-\mu_g}{\sigma_g+\varepsilon},
\qquad
\mu_g=\frac{1}{|g|}\sum_{\tau'\in g}\Rtraj_{\tau'},
\qquad
\sigma_g^2=\frac{1}{|g|-1}\sum_{\tau'\in g}(\Rtraj_{\tau'}-\mu_g)^2,
\label{eq:episode}
\end{equation}
and is broadcast unchanged to every step of $\tau$. Subtracting $\mu_g$ is a control-variate baseline~\citep{schulman2015gae,francoislavet2018introduction}. Because $\mu_g$ includes $\tau$ itself it is not independent of $a_t$, so it shrinks the gradient by a factor $1-1/|g|$ rather than leaving it unbiased, and dividing by $\sigma_g$ is a further data-dependent rescaling; the leave-one-out baseline of \citet{ahmadian2024rloo} removes the former, but we keep the group mean because GRPO and GiGPO use it and our comparison holds the baseline fixed. We write $\Aep_{\mathrm{mn}}$ for the \emph{mean-norm} variant that omits it and $\Aep_{\mathrm{msn}}$ for the \emph{mean-and-std-norm} variant. Equation~\ref{eq:episode} says nothing about \emph{which} step of $\tau$ was responsible for $\Rtraj_\tau$.

GiGPO~\citep{feng2025gigpo} recovers that resolution without a critic by re-grouping the same rollouts along the time axis. Two steps join the same \emph{step-group} $\mathcal G$ when they share an identical anchor observation, so that the environment presented the policy with the same decision context. Within each step-group one computes a group-relative advantage from the steps' discounted returns $\hat g_t=\sum_{t'\ge t}\gamma^{\,t'-t}r_{t'}$:
\begin{equation}
\Ast_t=\frac{\hat g_t-\mu_{\mathcal G}}{\sigma_{\mathcal G}+\varepsilon},
\qquad
\mu_{\mathcal G}=\frac{1}{|\mathcal G|}\sum_{t'\in\mathcal G}\hat g_{t'},
\label{eq:step}
\end{equation}
where $\sigma_{\mathcal G}$ is the standard deviation of $\{\hat g_{t'}\}_{t'\in\mathcal G}$, used or omitted consistently with the episode-level variant. A step-group of size one offers no within-group contrast; rather than emit a zero advantage we fall back to a prompt-group-relative normalization of $\hat g_t$ (Appendix~\ref{app:impl}), so that unique states still receive a signal. GiGPO then forms $w\,\Ast_t+\Aep$ with a fixed step weight $w$. The anchor-state grouping is what we inherit; making the use of $\Ast_t$ state-dependent rather than uniform is what we add.

\section{The GACA Model}
\label{sec:method}

GACA produces one advantage $\Afin_t$ per step for the surrogate~\eqref{eq:surrogate}. It starts from the two estimators of Section~\ref{sec:twolevel} and reshapes their combination in two moves, sketched in Figure~\ref{fig:overview}: it reads a criticality score off the policy's own uncertainty at each step, and it turns that score into a per-step mixing weight. Both quantities are already produced by the rollout, so nothing is added to the interaction or optimization budget: the whole mechanism costs $0.09$ seconds per training iteration against an iteration of roughly six minutes, and leaves peak memory untouched (Appendix~\ref{app:cost}).

\begin{figure}[t]
\centering
\includegraphics[width=\textwidth]{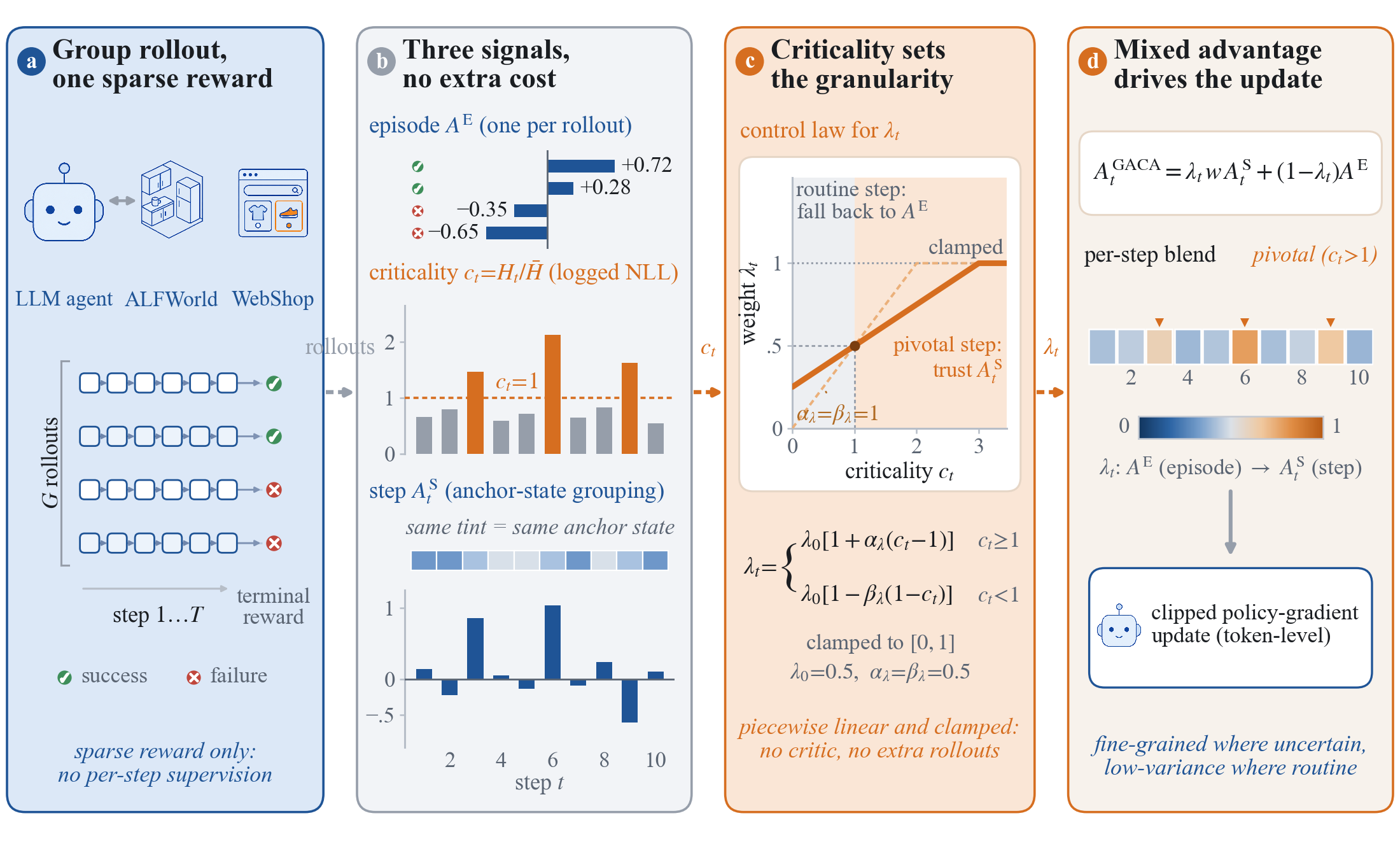}
\caption{GACA in one picture. \textbf{(a)} A group of $G$ rollouts of the same task is collected under a single sparse episode reward. \textbf{(b)} Three signals are read off those rollouts: the group-relative episode advantage $\Aep$ (Eq.~\ref{eq:episode}), one scalar per trajectory; the anchor-state step advantage $\Ast_t$ (Eq.~\ref{eq:step}), obtained by regrouping steps that share a decision context; and the per-step NLL proxy $\Hstep_t$ (Eq.~\ref{eq:nll}), normalized by its trajectory mean into the criticality score $\crit_t$ (Eq.~\ref{eq:crit}). \textbf{(c)} The orange path is the control signal: $\crit_t$ sets the mixing weight $\lammix_t$ through the piecewise-linear, clamped law of Eq.~\ref{eq:lambda}, which pivots at the trajectory-average uncertainty $\crit_t=1$. \textbf{(d)} The mixed advantage $\Afin_t=\lammix_t w\Ast_t+(1-\lammix_t)\Aep$ leans on the fine-grained step signal at pivotal steps and falls back to the low-variance episode signal at routine ones, then drives the clipped policy update.}
\label{fig:overview}
\end{figure}

\subsection{Reading criticality off the policy's own uncertainty}
\label{sec:criticality}
The Shannon entropy of the policy at a step measures how uncertain the agent is about what to do, and has long served both to drive exploration and to flag the decisions that matter most for learning~\citep{haarnoja2018sac,wang2025highentropy}. Computing the mean per-token entropy $\frac{1}{|a_t|}\sum_j\mathcal H[\pi_\theta(\cdot\mid s_t,a_t^{<j})]$ exactly would require the full conditional distribution at every position. The rollout instead records the log-probability of each \emph{taken} token, so we use their mean negative log-likelihood,
\begin{equation}
\Hstep_t=-\frac{1}{|a_t|}\sum_{j=1}^{|a_t|}\log\pi_{\theta_{\mathrm{old}}}\big(a_t^{j}\mid s_t,a_t^{<j}\big).
\label{eq:nll}
\end{equation}
The unnormalized sum $|a_t|\Hstep_t$ is an unbiased estimate of the conditional entropy $\mathcal H[\pi_{\theta_{\mathrm{old}}}(\cdot\mid s_t)]$ of the whole action, since each sampled token contributes an unbiased estimate of its own conditional entropy; dividing by the realized length $|a_t|$ makes $\Hstep_t$ a length-normalized, per-token version of that estimate, which is what we want as a score that is comparable across actions of different lengths.\footnote{The length normalization costs exact unbiasedness for the mean per-token entropy when $|a_t|$ is random, because $\E[|a_t|^{-1}\sum_j\cdot]\ne\E[\sum_j\cdot]/\E[|a_t|]$ in general; the two coincide when the action length is fixed. Appendix~\ref{app:finite-sample} states the exact relation, and the theory below is developed directly in terms of $\Hstep_t$, so nothing depends on the approximation.} It costs nothing to compute and requires no second forward pass.

What enters the estimator is not $\Hstep_t$ itself but its position relative to the rest of the same trajectory,
\begin{equation}
\crit_t=\frac{\Hstep_t}{\frac{1}{T}\sum_{t'=1}^{T}\Hstep_{t'}},
\label{eq:crit}
\end{equation}
so that $\crit_t>1$ marks a step at which the policy is more uncertain than it typically is on this trajectory. The score is never thresholded into a discrete set of ``critical'' steps: every step keeps its own step-level advantage, and $\crit_t$ acts only through the mixing weight of Section~\ref{sec:mixing}, so reliance on the fine-grained signal varies smoothly with uncertainty.

Entropy locates the steps where the policy \emph{hesitates}, which is not quite the same as the steps that \emph{matter}: a step can be pivotal while being taken confidently, if it is the one at which the return moves. We therefore allow the score to be fused with a reward-change term. Writing $\Delta_t=|\hat g_t-\hat g_{t-1}|$ for the absolute change in step return and normalizing both signals by their trajectory means, $\hat e_t=\Hstep_t/\overline{\Hstep}$ and $\hat d_t=\Delta_t/\overline{\Delta}$, the fused score is
\begin{equation}
\mathrm{score}_t=\alpha_{\mathrm c}\,\hat e_t+(1-\alpha_{\mathrm c})\,\hat d_t,\qquad \alpha_{\mathrm c}\in[0,1],
\label{eq:fuse}
\end{equation}
and the normalization of equation~\plaineqref{eq:crit} is applied to $\mathrm{score}_t$ in place of $\Hstep_t$. Setting $\alpha_{\mathrm c}=1$ recovers the pure-entropy detector, which is our default and, as Section~\ref{sec:ablations} shows, already carries the signal on these benchmarks.

\subsection{Per-step adaptive mixing}
\label{sec:mixing}
The final advantage interpolates between the two levels with a coefficient that the criticality score modulates around the trajectory average. Starting from a base weight $\lammix_0\in[0,1]$,
\begin{equation}
\lammix_t=
\begin{cases}
\min\big\{\lammix_0\left[1+\alpha_\lambda(\crit_t-1)\right],\,1\big\}, & \crit_t\ge 1,\\[3pt]
\max\big\{\lammix_0\left[1-\beta_\lambda(1-\crit_t)\right],\,0\big\}, & \crit_t< 1,
\end{cases}
\label{eq:lambda}
\end{equation}
where $\alpha_\lambda\ge0$ sets how far the weight is pushed up at above-average uncertainty and $\beta_\lambda\ge0$ how far it is pulled down below. The two branches meet at $\crit_t=1$, where $\lammix_t=\lammix_0$, so the law is continuous, nondecreasing in $\crit_t$, and clamped to $[0,1]$ (panel~(c) of Figure~\ref{fig:overview}; Figure~\ref{fig:lambdalaw} in Appendix~\ref{app:hyper} draws it for three modulation strengths). The advantage that reaches the surrogate is then
\begin{equation}
\Afin_t=\lammix_t\,w\,\Ast_t+(1-\lammix_t)\,\Aep,
\label{eq:final}
\end{equation}
with $w$ the step-advantage weight inherited from Section~\ref{sec:twolevel}, broadcast to the tokens of $a_t$ through the response mask. Both constituents are group-mean-baseline-subtracted, and $\lammix_t$ is held fixed within an update because it is computed from old-policy quantities. It is not, however, independent of the action taken: $\Hstep_t$ is read off the very action the rollout sampled, so the update is not an unbiased rescaling of the policy gradient, and since $\lammix_t$ grows with $\Hstep_t$ the rule tilts toward the step-level estimator precisely on the less likely actions. We regard this as a deliberate, uncertainty-correlated reweighting rather than an unbiased estimator; Appendix~\ref{app:finite-sample} gives a debiased variant that scores a step by a leave-one-out average over its anchor group, trading within-group resolution for action-independence. Turning the modulation off, $\alpha_\lambda=\beta_\lambda=0$, freezes $\lammix_t$ at $\lammix_0$ and leaves a fixed two-level estimator; at $\lammix_0=\tfrac12$ this is GiGPO's combination $w\Ast_t+\Aep$ scaled by $\tfrac12$, and the two extremes $\lammix_0\in\{0,1\}$ are pure episode-level and pure step-level credit. Writing the combination as a convex one rather than a sum is deliberate: it keeps the advantage magnitude bounded by the two constituents as $\lammix_t$ moves, which matters because that magnitude also determines how often a step is clipped in \eqref{eq:surrogate}. Algorithm~\ref{alg:gaca} shows that GACA replaces only the advantage step of a standard GRPO or GiGPO loop.

\begin{algorithm}[t]
\caption{GACA advantage computation (one optimization step)}
\label{alg:gaca}
\begin{algorithmic}[1]
\Require group rollouts $\{\tau_i\}_{i=1}^{G}$ with returns, step returns $\hat g_t$, anchor states, per-token log-probs; hyperparameters $\alpha_{\mathrm c},\lammix_0,\alpha_\lambda,\beta_\lambda,w$
\State compute episode advantage $\Aep$ by group standardization \Comment{Eq.~\plaineqref{eq:episode}}
\State build anchor-state step-groups; compute step advantage $\Ast_t$, with singleton fallback \Comment{Eq.~\plaineqref{eq:step}}
\State compute per-step NLL proxy $\Hstep_t$ from the logged token log-probabilities \Comment{Eq.~\plaineqref{eq:nll}}
\For{each trajectory $\tau$}
  \State form criticality scores $\crit_t$, optionally reward-gated \Comment{Eqs.~\plaineqref{eq:crit},~\plaineqref{eq:fuse}}
  \State set per-step mixing weights $\lammix_t$ from $\crit_t$ \Comment{Eq.~\plaineqref{eq:lambda}}
\EndFor
\State $\Afin_t\gets\lammix_t\,w\,\Ast_t+(1-\lammix_t)\,\Aep$ for all $t$ \Comment{Eq.~\plaineqref{eq:final}}
\State \Return $\{\Afin_t\}$ for the clipped surrogate~\eqref{eq:surrogate}
\end{algorithmic}
\end{algorithm}

\subsection{Why uncertainty is the right control signal}
\label{sec:theory}

The design rests on a chain of three links: where the policy's NLL is high, the actions it actually takes in that decision context disagree more; where they disagree more, the local action values spread further apart; and where the local action values spread further apart, the episode-level estimator, a single scalar that cannot distinguish the branches, is the less reliable of the two, so the optimal weight tilts toward the step level. We make each link precise below, state the two results that matter, and leave the assumptions, the full mixing proposition, and all proofs to Appendix~\ref{app:proofs}.

Fix a non-singleton anchor-state group $\mathcal G_z$ of size $m_z\ge2$. At visit $i$ the old policy samples a textual action $X_i$ of length $L_i$ from context $S_i$, which the environment canonicalizes to $U_i=\phi(X_i)$ in a finite set $\mathcal A_z$ of $K_z$ admissible actions. Write
\begin{equation}
\begin{gathered}
\ell_i=-\tfrac{1}{L_i}\log\pi_{\theta_{\mathrm{old}}}(X_i\mid S_i),
\qquad
h_z=\E[\ell_i\mid Z_i=z],
\qquad
D_z=\Pr(U\ne U'\mid Z=z),\\[3pt]
C_z=\Var_{U\sim p_z}\!\left[Q_z(U)\right],
\qquad
Q_z(u)=\E\!\left[\hat g\mid Z=z,U=u\right],
\end{gathered}
\label{eq:theory-defs}
\end{equation}
namely the mean-token NLL, its anchor-conditional expectation, the probability that two independent visits disagree on the canonical action, and the local decision criticality. The implemented $\Hstep_t$ is a single-sample observation of $\ell_i$, and the positive normalization in \eqref{eq:crit} preserves its within-trajectory ordering.

\begin{proposition}[Expected NLL certifies local criticality]
\label{prop:nllcrit}
Under Assumption~\ref{ass:canonical} (Appendix~\ref{app:proofs}), which bounds action lengths below by $L_{\min}$, caps at $M_z$ the number of textual realizations of any canonical action, and separates distinct canonical actions in value by $\kappa_z\ge\kappa_0>0$, let $\underline H_z(h)=\min\{[L_{\min}h-\log M_z]_+,\log K_z\}$ be the resulting floor on the action entropy and let
\begin{equation}
\underline D_z(h)=1-F_{K_z}^{-1}\big(\underline H_z(h)\big),
\qquad
F_K(p)=h_b(p)+(1-p)\log(K-1),
\label{eq:diversity-lb}
\end{equation}
where $h_b$ is the binary entropy and $F_K$, the largest entropy available to a distribution on $K$ symbols whose largest atom is $p$, decreases from $\log K$ to $0$ on $[1/K,1]$ and is therefore invertible. Then $D_z\ge\underline D_z(h_z)$ and $C_z\ge\underline C_z(h_z):=\tfrac{\kappa_0}{2}\underline D_z(h_z)$, both bounds are nondecreasing in $h_z$, and both are strictly positive as soon as $L_{\min}h_z>\log M_z$.
\end{proposition}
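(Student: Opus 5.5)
The proof runs in three links: an entropy floor from the expected NLL, a disagreement floor from Fano-type inversion, and a value-variance floor from the separation $\kappa_0$.

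\emph{Link 1: entropy floor.} Condition on $Z=z$. By definition, $L_i\ell_i=-\log\pi_{\theta_{\mathrm{old}}}(X_i\mid S_i)$. Since $L_i\ge L_{\min}$ and $\ell_i\ge0$, we have $L_{\min}\ell_i\le L_i\ell_i$. Taking conditional expectations gives
\[
L_{\min}h_z\le \E[-\log\pi(X\mid S)\mid z]=H(X\mid z),
\]
the entropy of the textual action. Under Assumption~\ref{ass:canonical} the context at a fixed anchor is the same, so this is a single distribution; otherwise we average the per-context entropies and use concavity. Next, $U=\phi(X)$ is a function of $X$, so the chain rule gives $H(X)=H(U)+H(X\mid U)$. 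Each canonical action has at most $M_z$ textual realizations, so $H(X\mid U)\le\log M_z$. Hence $H(U)\ge L_{\min}h_z-\log M_z$. Combined with $H(U)\ge0$ and $H(U)\le\log K_z$, this yields $H(p_z)\ge\underline H_z(h_z)$. The clamp at $\log K_z$ only keeps the argument inside the range of $F_{K_z}^{-1}$.

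\emph{Link 2: from entropy to disagreement.} Let $p^*=\max_u p_z(u)$. The Fano-type bound (grouping the largest atom against the rest and maximizing the remainder's entropy) gives $H(p_z)\le F_{K_z}(p^*)$, as the statement describes. Since $F_K$ is decreasing on $[1/K,1]$ and $p^*\ge1/K$, it follows that $p^*\le F_{K_z}^{-1}(H(p_z))\le F_{K_z}^{-1}(\underline H_z(h_z))$. For two independent draws, $D_z=1-\sum_u p_z(u)^2\ge 1-p^*\sum_u p_z(u)=1-p^*$. Therefore $D_z\ge\underline D_z(h_z)$.

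\emph{Link 3: from disagreement to value variance.} Use the identity $\Var[Q(U)]=\tfrac12\E[(Q(U)-Q(U'))^2]$ for an independent copy $U'$. Distinct canonical actions are separated in value by $\kappa_z$, so the squared difference is at least $\kappa_z^2$ on the event $\{U\ne U'\}$. This gives $C_z\ge \tfrac{\kappa_z^2}{2}D_z$. To match the stated form $\tfrac{\kappa_0}{2}\underline D_z$, I expect Assumption~\ref{ass:canonical} to phrase the separation as $(Q_z(u)-Q_z(u'))^2\ge\kappa_z\ge\kappa_0$, and I will use it in that form. The identity $\Var=\tfrac12\E(\cdot)^2$ is otherwise routine.

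\emph{Monotonicity and positivity.} The map $h\mapsto\underline H_z(h)$ is nondecreasing. The map $F_K^{-1}$ is decreasing, so $1-F_K^{-1}$ is nondecreasing, and composing gives monotonicity of both bounds. If $L_{\min}h_z>\log M_z$, then $\underline H_z>0$, so $F_K^{-1}(\underline H_z)<1$ because $F_K(1)=0$ and $F_K$ is strictly decreasing. Hence $\underline D_z>0$, and therefore $\underline C_z>0$.

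\emph{Main obstacle.} The delicate step is Link 2: verifying that $F_K(p)$ really is the maximal entropy given largest atom $p$ on $[1/K,1]$, so that the inversion is legitimate. The remaining $K-1$ atoms sum to $1-p$, and putting mass $(1-p)/(K-1)\le p$ on each is feasible exactly when $p\ge1/K$. Strict decrease follows from $F_K'(p)=\log\frac{(1-p)}{p(K-1)}<0$ for $p>1/K$.
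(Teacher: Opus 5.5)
Your proof follows the paper's own argument nearly step for step: the surprisal floor $L_{\min}h_z$, the chain rule with the $M_z$ cap, the collision bound $D_z\ge 1-p^{\max}_z$, inversion of $F_{K_z}$ as the maximal entropy given the largest atom, and the variance identity. Your monotonicity and positivity arguments also match the paper's.

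One step rests on a hypothesis the paper does not make. In Assumption~\ref{ass:canonical} the separation is an average, $\kappa_z:=\E[(Q_z(U)-Q_z(U'))^2\mid U\ne U',Z=z]\ge\kappa_0$. You instead posit the pointwise gap $(Q_z(u)-Q_z(u'))^2\ge\kappa_z$ for all $u\ne u'$, which is strictly stronger. So your claim that the squared difference is at least $\kappa_z$ on $\{U\ne U'\}$ is not available, since individual pairs of distinct actions may be arbitrarily close in value. You do not need it. The squared difference vanishes on $\{U=U'\}$, so $2C_z=\E[(Q_z(U)-Q_z(U'))^2\mid Z=z]=\kappa_z D_z$ holds as an exact identity. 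This gives $C_z\ge\tfrac{\kappa_0}{2}D_z\ge\tfrac{\kappa_0}{2}\underline D_z(h_z)$ under the paper's weaker assumption, and it is how the paper argues.

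There is also a smaller point in Link 1. Visits to the same anchor state need not share the full context $S_i$, because it carries the interaction history. So $\E[-\log\pi_{\theta_{\mathrm{old}}}(X\mid S)\mid Z=z]$ equals $H(X\mid S,Z=z)$, not $H(X\mid Z=z)$. Your concavity fallback, $H(X\mid S,Z=z)\le H(X\mid Z=z)$, is exactly the paper's ``conditioning cannot increase entropy'' step and should be the main line rather than a side case.
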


The role of $M_z$ is to separate semantic uncertainty from linguistic variation: $M_z=1$ when the policy is constrained to emit a canonical admissible action, and larger $M_z$ discounts the uncertainty attributable to paraphrase. The certificate is one-sided, which is its honest content: high NLL certifies branching, whereas low NLL merely fails to certify it. It also matches the way the score is used, since \eqref{eq:crit} compares a step against its own trajectory rather than against an absolute scale. What matters is that the certificate bite at realistic entropies, and routing the entropy floor through the largest atom of $p_z$ rather than through a distance to the uniform distribution is what buys this: the natural Pinsker argument stays positive only within half a nat of maximal entropy, so it is silent for any policy that is not near-uniform over its admissible actions, whereas \eqref{eq:diversity-lb} bites whenever $L_{\min}h_z>\log M_z$ (Appendix~\ref{app:proof-nllcrit}).

The second link is a bias--variance trade-off between the two estimators, measured against the target both are trying to represent: the group-local contrast $A^{\star}_{i,z}$ that compares visit $i$'s action value $Q_z(U_i)$ with the average over the other visits to the same anchor state (equation~\plaineqref{eq:local-target} in the appendix). Model the observed step return as $\hat g_i=Q_z(U_i)+\varepsilon_i$ with conditionally independent noise of variance $\sigma_z^2$, and let the episode estimator shrink $A^{\star}_{i,z}$ toward zero by a factor $\eta_z\in[0,1]$ while adding independent noise of variance $\tau_z^2$ (Assumption~\ref{ass:two}). Under this model the two estimators have conditional mean-squared errors $M_S(z)$ and $M_E(z)$, given in closed form in Proposition~\ref{prop:mse} (Appendix~\ref{app:proof-mse}); the risk of the mixture $\lammix\Ast_i+(1-\lammix)\Aep_i$ is then the convex quadratic $R_z(\lammix)=\lammix^2M_S(z)+(1-\lammix)^2M_E(z)$, minimized at
\begin{equation}
\lammix_z^\star=\frac{M_E(z)}{M_E(z)+M_S(z)},
\label{eq:lamstar}
\end{equation}
which is nondecreasing in the local criticality $C_z$ and in the anchor-group size $m_z$ (Proposition~\ref{prop:mse}). Combining the two links, substituting the certificate of Proposition~\ref{prop:nllcrit} into \eqref{eq:lamstar} yields a lower envelope $\underline\lammix_z(h_z)\le\lammix_z^\star$ that is itself nondecreasing in the expected NLL: a step whose NLL is high is guaranteed to want at least a certain amount of step-level weight, and that guaranteed amount grows with the NLL. This is what the monotone gate of Eq.~\ref{eq:lambda} implements. We note the gap deliberately: a monotone lower envelope does not make $\lammix_z^\star$ itself monotone in $h_z$, and it is monotone only when the remaining per-state quantities $\sigma_z^2,\tau_z^2,\eta_z,m_z$ are held fixed, so the gate should be read as following a certified floor rather than tracking the optimum exactly.

What does follow, with no further conditions, is that adapting at all is better than not.

\begin{proposition}[State-adaptive mixing dominates fixed mixing]
\label{prop:dom}
With $R_z$ as above, $R_z(\lammix)-R_z(\lammix_z^\star)=\big(M_E(z)+M_S(z)\big)(\lammix-\lammix_z^\star)^2$, and therefore
\begin{equation}
\E_Z\!\left[R_Z(\lammix_Z^\star)\right]\ \le\ \min_{\lammix_0\in[0,1]}\ \E_Z\!\left[R_Z(\lammix_0)\right],
\label{eq:excess-risk}
\end{equation}
with strict inequality whenever $\lammix_Z^\star$ is not almost surely constant.
\end{proposition}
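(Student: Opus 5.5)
The plan is to complete the square in the convex quadratic $R_z(\lammix)=\lammix^2M_S(z)+(1-\lammix)^2M_E(z)$, then average over $Z$ and compare with the best constant. Write $S_z:=M_E(z)+M_S(z)$. Expanding gives $R_z(\lammix)=S_z\lammix^2-2M_E(z)\lammix+M_E(z)$. Since $\lammix_z^\star=M_E(z)/S_z$ by \eqref{eq:lamstar}, this is $S_z(\lammix-\lammix_z^\star)^2+M_E(z)-M_E(z)^2/S_z$. The constant term equals $R_z(\lammix_z^\star)=M_E(z)M_S(z)/S_z$, which yields the stated excess-risk identity. One degenerate case needs a convention: $S_z=0$ forces both MSEs to vanish, so $R_z\equiv0$ and the identity holds with any value of $\lammix_z^\star$. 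Otherwise $S_z>0$, because both $M_S(z)$ and $M_E(z)$ are nonnegative MSEs from Proposition~\ref{prop:mse}. In that case $\lammix_z^\star\in[0,1]$, so $[0,1]$ contains the pointwise optimum.

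For the inequality \eqref{eq:excess-risk}, fix any $\lammix_0\in[0,1]$. Pointwise, $R_Z(\lammix_Z^\star)\le R_Z(\lammix_0)$. Taking expectations gives $\E_Z[R_Z(\lammix_Z^\star)]\le\E_Z[R_Z(\lammix_0)]$, and taking the infimum over $\lammix_0$ gives the bound. Integrability is assumed, as in the statement, so that the expectations are finite. The minimum over $\lammix_0$ is attained because $\lammix_0\mapsto\E_Z[R_Z(\lammix_0)]=\E[S_Z]\lammix_0^2-2\E[M_E(Z)]\lammix_0+\E[M_E(Z)]$ is a continuous quadratic on a compact interval.

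The strict part is the step needing care. Using the identity, $\E_Z[R_Z(\lammix_0)]-\E_Z[R_Z(\lammix_Z^\star)]=\E_Z[S_Z(\lammix_0-\lammix_Z^\star)^2]$. The minimizing constant is the $S_Z$-weighted mean $\bar\lammix=\E[S_Z\lammix_Z^\star]/\E[S_Z]=\E[M_E(Z)]/\E[S_Z]$, which lies in $[0,1]$. At that constant the excess is the $S_Z$-weighted variance of $\lammix_Z^\star$. A weighted variance is strictly positive iff $\lammix_Z^\star$ is non-constant on the set where the weight $S_Z>0$ has positive mass. This is the main obstacle: "not almost surely constant" must be read modulo the null-risk set $\{S_Z=0\}$, where $\lammix_Z^\star$ is arbitrary and irrelevant. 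I would state this explicitly, either by defining $\lammix_z^\star$ there (say as $\lammix_0$) or by noting that Assumption~\ref{ass:two} with $\sigma_z^2>0$ forces $M_S(z)>0$. Given $S_Z>0$ a.s., if $\lammix_Z^\star$ is not a.s. constant then $\E[S_Z(\bar\lammix-\lammix_Z^\star)^2]>0$. Hence every constant $\lammix_0$, including the minimizer $\bar\lammix$, incurs strictly larger expected risk, and the inequality is strict.
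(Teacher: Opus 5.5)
Your proposal is correct and follows the paper's proof. The paper likewise expands the quadratic $R_z$ about its minimizer, using that $R_z''=2(M_E(z)+M_S(z))$ is constant. It then takes expectations of the pointwise nonnegative excess and minimizes over $\lambda_0$, with strictness coming from the fact that equality forces $\lambda_0=\lambda_z^\star$ for almost every state with $M_E(z)+M_S(z)>0$.

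What you add are refinements rather than a different route. Your explicit best constant $\bar\lambda=\E[M_E(Z)]/\E[S_Z]$, whose excess is proportional to the $S_Z$-weighted variance of $\lambda_Z^\star$, makes the attainment of the minimum explicit. Your treatment of the null-risk set $\{S_Z=0\}$ makes precise the ``modulo $\{S_Z=0\}$'' reading that the paper's one-line strictness argument leaves implicit.
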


The excess risk of any fixed weight is the variance of the per-state optimum around it, weighted by the total error. This is the prediction the $\lammix_0$ sweep of Section~\ref{sec:ablations} tests directly: no constant can match a rule that moves with the state, provided the state-dependence is real. A word on how to read this against the experiments that follow. The analysis above treats \eqref{eq:final} without the factor $w$, which rescales the step estimator; any fixed $w>0$ is absorbed into the scale of $\Ast$, so we simply keep $w=1$ throughout. The more subtle point is that varying $\lammix_t$ across steps changes not only the granularity of credit but also the magnitude of the advantage entering \eqref{eq:surrogate}, and hence how often a step gets clipped -- exactly the confound the controls in Section~\ref{sec:ablations} are built to rule out. The \emph{uniform} criticality variant holds the mixing weight constant at $\lammix_0$ while keeping everything else identical, and the \emph{random} variant preserves the entire multiset of per-step weights and only permutes which step receives which; between them, neither the mean weight nor the distribution of advantage magnitudes can explain whatever difference we see.

\section{Experiments}
\label{sec:experiments}

We ask two questions. Does granularity-adaptive credit assignment improve long-horizon agent RL over the estimators it is built from? And is the improvement attributable to criticality-adaptive granularity specifically, rather than to a better constant or to the mere presence of a varying weight?

\subsection{Setup}
\label{sec:setup}
We evaluate on two standard interactive agent benchmarks under the verl-agent stack~\citep{feng2025gigpo,sheng2025hybridflow}. ALFWorld~\citep{shridhar2021alfworld} is a text-based embodied environment in which the agent completes household tasks such as ``put a clean mug in the cabinet'' over a horizon of up to $50$ steps; WebShop~\citep{yao2022webshop} is a grounded web-navigation environment in which the agent searches, browses, and purchases a target product within $15$ steps. Both give a sparse episode-level success reward with a small invalid-action penalty, and both fit the group-rollout interface of Section~\ref{sec:mdp}. We hold out $128$ evaluation tasks per benchmark, report success rate, and average every number over three random seeds, matching the protocol of the baselines we quote.

The comparison holds everything but the advantage estimator fixed. GRPO~\citep{shao2024deepseekmath} uses the episode-level advantage alone; GiGPO~\citep{feng2025gigpo} adds the anchor-state step-level advantage at a fixed weight and uniform granularity, and is the control that isolates GACA's contribution, since the two share the entire two-level decomposition. Appendix~\ref{app:hyper} lists the training configuration shared by all three methods, aligned with the public GiGPO recipes, together with GACA's own hyperparameters; the benchmark summary is in Appendix~\ref{app:extra}.

\subsection{Main results}
\label{sec:main}
GACA is the strongest estimator in all four settings (Table~\ref{tab:main}). Against GiGPO it gains $10.2$ and $2.4$ points on ALFWorld and $13.6$ and $8.4$ points on WebShop at the 1.5B and 7B scales; against GRPO the margins run from $15.6$ to $24.2$ points. The training curves in Figure~\ref{fig:results}(a,b) show that the gap is not only a matter of where the runs end. GACA separates from both baselines within the first thirty epochs and reaches GiGPO's peak success rate in roughly half the epochs GiGPO needs to get there, so the adaptive weight buys sample efficiency and not just a better final policy. All three methods are stable over the full budget, with no sign of the collapse that a step-level signal applied at full strength might have caused. The table rewards a closer look, too. The gain is larger at 1.5B than at 7B on both benchmarks, which is what the mechanism predicts: a weaker policy is uncertain at more steps, so more of the trajectory falls in the regime where the step-level signal is worth its variance, leaving correspondingly more for an adaptive weight to exploit. It is also larger on WebShop than on ALFWorld at the same scale, consistent with WebShop's shorter horizon concentrating the outcome on a handful of genuinely branching decisions -- which query to issue, which listing to open -- rather than spreading it over a long chain of navigation steps.

\begin{table}[t]
\caption{Success rate ($\uparrow$) on the held-out evaluation tasks, averaged over three random seeds with the standard deviation across seeds. Best per column in \textbf{bold}; our method is shaded. GRPO and GiGPO are the numbers reported by \citet{feng2025gigpo} under matching backbones and rollout budgets, quoted at the precision given there and averaged over three seeds as well; following our normalization convention we take the mean-and-std-norm GiGPO result on ALFWorld and the mean-norm result on WebShop. GACA is our own run under the shared configuration of Appendix~\ref{app:hyper}, reported at its best held-out checkpoint (epoch 100 of 150). Because the baselines are quoted rather than re-run, this comparison inherits their checkpoint-selection protocol; the ablation of Table~\ref{tab:ablation}, in which every variant is trained and selected identically by us, is the matched-protocol test of the mechanism.}
\label{tab:main}
\centering
\small
\setlength{\tabcolsep}{9pt}
\renewcommand{\arraystretch}{1.2}
\begin{tabular}{@{}lcccc@{}}
\toprule
\multirow{2}{*}{\textbf{Method}} & \multicolumn{2}{c}{\textbf{ALFWorld}} & \multicolumn{2}{c}{\textbf{WebShop}} \\
\cmidrule(lr){2-3}\cmidrule(lr){4-5}
 & \textbf{1.5B} & \textbf{7B} & \textbf{1.5B} & \textbf{7B} \\
\midrule
GRPO~\citep{shao2024deepseekmath}
& $72.8$\,\sd{3.6} & $77.6$\,\sd{5.2} & $56.8$\,\sd{3.8} & $66.1$\,\sd{3.7} \\
GiGPO~\citep{feng2025gigpo}
& $86.7$\,\sd{1.7} & $90.8$\,\sd{1.3} & $67.4$\,\sd{4.5} & $75.2$\,\sd{3.8} \\
\midrule
\rowcolor{gacashade}
\textbf{GACA (ours)}
& $\mathbf{96.88}$\,\sd{0.64} & $\mathbf{93.23}$\,\sd{0.97}
& $\mathbf{80.99}$\,\sd{2.24} & $\mathbf{83.59}$\,\sd{3.38} \\
\rowcolor{gacashade}
\quad\textit{vs.\ GiGPO}
& \textcolor{gacaorange}{$+10.2$} & \textcolor{gacaorange}{$+2.4$}
& \textcolor{gacaorange}{$+13.6$} & \textcolor{gacaorange}{$+8.4$} \\
\bottomrule
\end{tabular}
\end{table}

\begin{figure}[t]
\centering
\includegraphics[width=\textwidth]{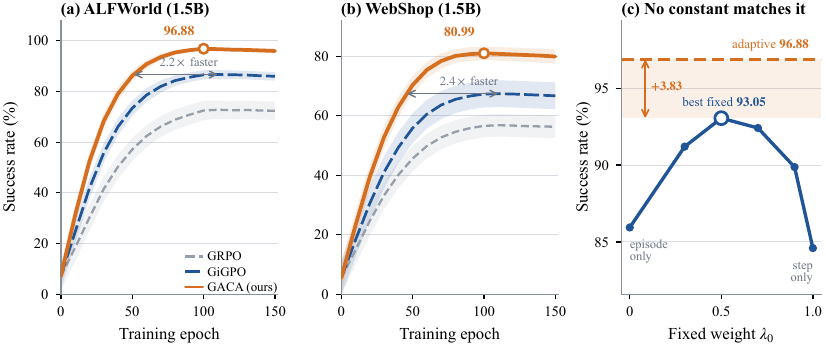}
\caption{\textbf{(a,b)} Held-out success rate over training at the 1.5B scale, averaged over three seeds with one standard deviation shaded. GACA converges to a higher level and reaches GiGPO's peak in roughly half the epochs; the horizontal arrow spans the epochs at which each method first attains that level, and the open marker is the epoch-100 checkpoint of Table~\ref{tab:main}. The 7B curves preserve the same ordering with the smaller margins of Table~\ref{tab:main}. \textbf{(c)} Sweeping the fixed mixing weight $\lammix_0$ on ALFWorld (1.5B) with adaptation off traces a concave curve peaking at $\lammix_0=0.5$; pure episode-level credit ($\lammix_0=0$) and pure step-level credit ($\lammix_0=1$) are both markedly worse, and criticality-adaptive mixing clears the best constant by $3.83$ points, the direction Proposition~\ref{prop:dom} predicts once the per-state optima differ. Part of the concavity is a scale effect, which is why the random control of Table~\ref{tab:ablation} rather than this sweep isolates the mechanism.}
\label{fig:results}
\end{figure}

\subsection{Where the gain comes from}
\label{sec:ablations}
The ablation grid varies one design choice at a time on ALFWorld (1.5B) with everything else at its default and a single training and selection protocol throughout (Table~\ref{tab:ablation}).

The first block asks whether the entropy score is doing the work or whether any varying weight would do as well. A score held constant across the trajectory (the \emph{uniform} control) makes $\crit_t\equiv1$ and $\lammix_t\equiv\lammix_0$, the fixed two-level estimator; this is the same configuration as the $\lammix_0=0.5$ sweep entry and the modulation-off entry, and all three score $93.05$. A random permutation of the same entropy scores keeps the multiset of mixing weights intact and only misroutes them, and is \emph{worse than not adapting at all}, at $88.54$. The ordering $88.54<93.05<96.88$ is our cleanest evidence that the mechanism does what it claims: adaptivity applied to the wrong steps actively hurts, so the $3.83$-point gain comes from where the weight is placed, not from the fact that it moves.

No constant weight matches the adaptive rule. The sweep over $\lammix_0$ with adaptation off is concave with an interior optimum at $0.5$ (Figure~\ref{fig:results}c); both endpoints are poor, $85.94$ for pure episode-level credit and $84.60$ for pure step-level. Some of that concavity is mechanical, since a convex combination of two imperfectly correlated estimators is smallest in magnitude at an interior weight and so is clipped least often there, and the sweep on its own cannot separate that from credit quality. The adaptive rule sits $3.83$ points above the best constant, the direction Proposition~\ref{prop:dom} predicts for the oracle weight; that a heuristic score attains it is not implied by the proposition, and it is the random control, not the sweep, that rules out the scale explanation.

Modulation helps up to a point and then stops. Going from none to $\alpha_\lambda=\beta_\lambda=0.5$ adds $3.83$ points, but pushing to $1.0$ gives back $1.44$ of them, for two reasons. The criticality score is a single-sample NLL and therefore noisy, and stronger modulation multiplies that noise into the weight: at $\alpha_\lambda=1$ a one-standard-deviation fluctuation in $\crit_t$ moves $\lammix_t$ by $0.16$ rather than $0.08$ (Section~\ref{sec:diagnostics}). The clamp also binds sooner, at $\crit_t\ge2$ instead of $\crit_t\ge3$, so the most uncertain steps, exactly the ones the gate exists to distinguish, start to collapse onto a single weight.

Reward-gating does not help on these benchmarks. Pure entropy ($\alpha_{\mathrm c}=1$) is best, and mixing in the reward-change term costs $0.83$ and $1.68$ points at $\alpha_{\mathrm c}=0.7$ and $0.5$. With a sparse terminal reward, $\Delta_t$ is nonzero only on the few steps where the discounted return jumps, so the fused score is close to a coarse indicator and dilutes a signal that is already dense and well-calibrated. We keep the term in the method because it costs nothing and may matter under denser shaping, but we do not claim it as a contribution.

\begin{table}[t]
\caption{Ablation grid on ALFWorld (1.5B): each block varies one design choice with the remaining components at their defaults, under a single training and selection protocol and averaged over the same three seeds. Best per block in \textbf{bold}; the full default configuration is shaded. The three entries marked $\dagger$ are the same configuration, since a constant criticality score forces $\lammix_t\equiv\lammix_0$, so the rise from $93.05$ to $96.88$ isolates criticality-adaptive mixing from every other difference.}
\label{tab:ablation}
\centering
\small
\setlength{\tabcolsep}{12pt}
\renewcommand{\arraystretch}{1.12}
\begin{tabular}{@{}lcc@{}}
\toprule
\textbf{Variant} & \textbf{Success rate} & \textbf{$\Delta$ vs.\ default} \\
\midrule
\multicolumn{3}{@{}l}{\textbf{Criticality score.}\ \ \textit{Is it entropy, or would any varying weight do?}}\\
\rowcolor{gacashade}
\quad entropy \textit{(default)}   & $\mathbf{96.88}$ & \textit{reference} \\
\quad uniform$^\dagger$            & $93.05$          & \textcolor{gacagray}{$-3.83$} \\
\quad random permutation           & $88.54$          & \textcolor{gacagray}{$-8.34$} \\
\addlinespace[3pt]
\multicolumn{3}{@{}l}{\textbf{Fixed mixing weight $\lammix_0$, adaptation off.}\ \ \textit{Is there a constant that matches it?}}\\
\quad $0.0$ (episode-level only)   & $85.94$          & \textcolor{gacagray}{$-10.94$} \\
\quad $0.3$                        & $91.20$          & \textcolor{gacagray}{$-5.68$} \\
\quad $0.5^\dagger$                & $\mathbf{93.05}$ & \textcolor{gacagray}{$-3.83$} \\
\quad $0.7$                        & $92.41$          & \textcolor{gacagray}{$-4.47$} \\
\quad $0.9$                        & $89.87$          & \textcolor{gacagray}{$-7.01$} \\
\quad $1.0$ (step-level only)      & $84.60$          & \textcolor{gacagray}{$-12.28$} \\
\addlinespace[3pt]
\multicolumn{3}{@{}l}{\textbf{Modulation strength $\alpha_\lambda\!=\!\beta_\lambda$.}\ \ \textit{How hard should the weight be pushed?}}\\
\quad $0$ (no modulation)$^\dagger$ & $93.05$         & \textcolor{gacagray}{$-3.83$} \\
\quad $0.25$                        & $95.12$         & \textcolor{gacagray}{$-1.76$} \\
\rowcolor{gacashade}
\quad $0.5$ \textit{(default)}      & $\mathbf{96.88}$ & \textit{reference} \\
\quad $1.0$                         & $95.44$         & \textcolor{gacagray}{$-1.44$} \\
\addlinespace[3pt]
\multicolumn{3}{@{}l}{\textbf{Fusion weight $\alpha_{\mathrm c}$.}\ \ \textit{Does reward change add anything to entropy?}}\\
\rowcolor{gacashade}
\quad $1.0$ \textit{(default)}      & $\mathbf{96.88}$ & \textit{reference} \\
\quad $0.7$                         & $96.05$         & \textcolor{gacagray}{$-0.83$} \\
\quad $0.5$                         & $95.20$         & \textcolor{gacagray}{$-1.68$} \\
\bottomrule
\end{tabular}
\end{table}

\subsection{Diagnostics}
\label{sec:diagnostics}
We log four quantities throughout training to check that the mechanism operates in its intended range (Table~\ref{tab:diag}). The criticality score has mean one by construction, so what matters is its spread: $0.31$ on ALFWorld and $0.27$ on WebShop. The gate is therefore neither flat, which would collapse GACA onto its fixed-weight special case, nor so dispersed that most steps sit at an extreme.

The mixing weight itself needs care to interpret. Because we use $\alpha_\lambda=\beta_\lambda$, the two branches of \eqref{eq:lambda} are the same line, and before clamping $\lammix_t$ is the affine map $\lammix_0[1+\alpha_\lambda(\crit_t-1)]$ of a score whose mean is one. Its average is therefore pinned at $\lammix_0=0.50$ and its standard deviation is $\lammix_0\alpha_\lambda$ times that of the score, which is $0.08$ and $0.07$ on the two benchmarks. The diagnostic value of logging $\lammix_t$ is thus not its mean but the clamp: at $\lammix_0=\alpha_\lambda=0.5$ a step reaches $\lammix_t=1$ only once $\crit_t\ge3$, more than six score standard deviations above the mean, and we observe under $1\%$ of steps at either boundary, so the gate is operating in its linear regime rather than being pinned by the clamp. The consequence is worth stating plainly, since it is the obvious objection: averaged over steps, GACA applies exactly the weight the best fixed estimator applies, so every point of its advantage comes from how the weight is redistributed across steps rather than from how much is applied overall. That is the quantity Proposition~\ref{prop:dom} bounds, and it is why the \emph{random} control of Section~\ref{sec:ablations}, which preserves the whole distribution of weights and permutes only their assignment, is the decisive comparison. Finally, $86\%$ of ALFWorld and $79\%$ of WebShop steps fall in a non-singleton anchor group and so receive a genuine within-group contrast; the rest fall back to the prompt group (Appendix~\ref{app:impl}).

\section{Conclusion}
\label{sec:conclusion}

GACA adapts the granularity of critic-free credit assignment to how pivotal each step is. It scores a step by the negative log-likelihood its own rollout already recorded and mixes the step- and episode-level advantages with a weight that grows with that score, trusting the fine-grained estimator where the policy is uncertain and the low-variance one where it is confident. The analysis says when this is warranted: a high anchor-conditional NLL certifies a floor on local action-value variance and hence on the error-optimal step weight, and Proposition~\ref{prop:dom} adds that adapting by state beats every constant once the per-state optima differ. On ALFWorld and WebShop at 1.5B and 7B this yields consistent gains over GRPO and GiGPO, and a matched-protocol ablation locates the gain in where the weight is placed rather than in how large it is on average. Appendix~\ref{app:related} places GACA against the wider literature on critic-free RL for LLM agents, credit assignment, and entropy-aware training.

What we can claim has limits worth stating directly. The criticality signal is a single-sample, length-normalized NLL: cheap to compute, but noisy, and because it depends on the very action the policy sampled, the reweighting it drives is not unbiased. The base mixing weight and the modulation strengths are still hyperparameters rather than quantities GACA infers on its own, even though \eqref{eq:lamstar} already gives the optimum in closed form in terms of estimable error variances; turning that into an online estimate is the most direct extension we see. And the theory speaks only to the mixing step in isolation, not to the learning dynamics it sets in motion, so how adaptive granularity translates into sample efficiency end to end is still open, as is how far the recipe carries beyond ALFWorld and WebShop.

\bibliography{iclr2026_conference}
\bibliographystyle{iclr2026_conference}

\appendix

\section{Extended Related Work}
\label{app:related}

\textbf{Critic-free RL for LLMs and agents.} Replacing the learned value function with a group baseline has become the dominant recipe for RL on language models, from GRPO~\citep{shao2024deepseekmath,guo2025deepseekr1} and REINFORCE-style variants~\citep{ahmadian2024rloo} to refinements of its clipping and normalization~\citep{yu2025dapo}. Carrying this from single-turn reasoning to multi-turn agents required new infrastructure and recipes~\citep{sheng2025hybridflow,wang2025ragen}, and an alternative line keeps a critic but structures it hierarchically across turns~\citep{zhou2024archer}. GiGPO~\citep{feng2025gigpo} is our closest comparison: its anchor-state grouping is, to our knowledge, the first critic-free step-level advantage for LLM agents, and we adopt it wholesale. Where we depart is that GiGPO commits to one granularity for the whole trajectory, whereas our premise is that the resolution at which credit is useful is a property of the state.

\textbf{Credit assignment and reward redistribution.} Apportioning a delayed outcome among the steps that caused it is an old problem~\citep{sutton2018rl}, attacked by return decomposition~\citep{arjona2019rudder}, hindsight relabeling~\citep{andrychowicz2017her}, generalized advantage estimation~\citep{schulman2015gae}, and potential-based shaping~\citep{ng1999policy}. For LLMs, process reward models supply dense step-level supervision where intermediate correctness is checkable~\citep{lightman2024lets,wang2024mathshepherd,uesato2022solving}, and VinePPO replaces the critic with Monte-Carlo value estimates from extra rollouts branched at intermediate states~\citep{kazemnejad2024vineppo}. Both buy resolution with a resource GACA does not spend, step annotations in the first case and extra environment interaction in the second, whereas GACA reuses a quantity the rollout already produced to decide \emph{where} finer credit is worth taking. The contrast with GAE is sharpest: its $\lambda$ is one global constant trading bias for variance, ours is set per step by the state, and Proposition~\ref{prop:dom} says no constant can imitate that.

\textbf{Entropy and the value of uncertain decisions.} Entropy has long shaped exploration and stability in RL~\citep{haarnoja2018sac}, and recent analyses of RL for LLM reasoning show that a small, high-entropy minority of tokens drives most of the policy improvement while entropy collapse degrades learning~\citep{wang2025highentropy,cui2025entropy}. Those results operate on the token axis within a single response; we carry the idea to the \emph{step} axis of a multi-turn agent, where the unit of uncertainty is an environment action and the payoff is not which tokens to update but which estimator to trust.

\section{Proofs}
\label{app:proofs}

Throughout this appendix we work conditionally on a non-singleton anchor-state group $\mathcal G_z$ of size $m_z\ge2$, with the notation of \eqref{eq:theory-defs}, and entropies use natural logarithms. The analysis is stated for the default step weight $w=1$; any fixed $w>0$ is absorbed into the scale of the step estimator.

\begin{assumption}[Canonical-action regularity]
\label{ass:canonical}
For each anchor state $z$: action lengths satisfy $L_i\ge L_{\min}>0$; each canonical action has at most $M_z$ textual realizations with nonzero probability under $\pi_{\theta_{\mathrm{old}}}$; and distinct canonical actions are separated in value,
\begin{equation}
\kappa_z:=\E\!\left[(Q_z(U)-Q_z(U'))^2\mid U\ne U',Z=z\right]\ \ge\ \kappa_0>0 .
\nonumber
\end{equation}
\end{assumption}

\begin{assumption}[Two-level noise model]
\label{ass:two}
The observed step return is $\hat g_i=Q_z(U_i)+\varepsilon_i$, where the $\varepsilon_i$ are conditionally independent given $Z=z$, independent of the canonical actions $\{U_j\}$, with zero mean and common variance $\sigma_z^2$. Writing $\Aep_i$ for the episode advantage of the trajectory that contributed visit $i$, the episode-level estimator satisfies $\Aep_i=(1-\eta_z)A_{i,z}^{\star}+\xi_i^{E}$ with $\eta_z\in[0,1]$, where
\begin{equation}
A_{i,z}^{\star}=Q_z(U_i)-\frac{1}{m_z-1}\sum_{j\in\mathcal G_z,\,j\ne i}Q_z(U_j)
\label{eq:local-target}
\end{equation}
is the group-local target, and $\xi_i^{E}$ has conditional mean zero, variance $\tau_z^2$, and is orthogonal both to $A_{i,z}^{\star}$ and to the step-estimation error. The coefficient $\eta_z$ is the fraction by which a single trajectory-level scalar shrinks the group-local target toward zero, so a larger $\eta_z$ means the episode signal represents the step more poorly, as expected at a pivotal step that one number cannot resolve.
\end{assumption}

The leave-one-out return contrast $\widetilde A_i^{S}=\hat g_i-\frac{1}{m_z-1}\sum_{j\ne i}\hat g_j$ is the unstandardized step estimator matching \eqref{eq:local-target}. The implemented mean-norm estimator of \eqref{eq:step} differs from it by the deterministic factor $(m_z-1)/m_z$. That factor depends on $z$, so a single global $w$ cannot absorb it; what it does is shrink the step estimator slightly more in small groups, which moves in the same direction as the group-size effect of Proposition~\ref{prop:mse} and so does not change any of the monotonicity conclusions. Mean-and-std normalization adds a further within-group rescaling by $1/(\sigma_{\mathcal G}+\varepsilon)$, which is random rather than deterministic; Remark~\ref{rem:norm} discusses what it does to the argument.

\subsection{Proof of Proposition~\ref{prop:nllcrit}}
\label{app:proof-nllcrit}
Conditioned on $Z=z$ the sampled action follows the old policy, so $\E[L_i\ell_i\mid Z=z]=H(X\mid S,Z=z)$. Since $\ell_i\ge0$ and $L_i\ge L_{\min}$ we have $L_i\ell_i\ge L_{\min}\ell_i$ pointwise, and conditioning cannot increase entropy, so
\begin{equation}
H(X\mid Z=z)\ \ge\ H(X\mid S,Z=z)\ =\ \E[L_i\ell_i\mid Z=z]\ \ge\ L_{\min}h_z .
\label{eq:proof-text-entropy}
\end{equation}
Because $U=\phi(X)$ is a deterministic function of $X$, the chain rule gives $H(X\mid Z=z)=H(U\mid Z=z)+H(X\mid U,Z=z)$, and Assumption~\ref{ass:canonical} caps the conditional support of $X$ given $U$ at $M_z$ points, so $H(X\mid U,Z=z)\le\log M_z$ and
\begin{equation}
H(U\mid Z=z)\ \ge\ L_{\min}h_z-\log M_z .
\label{eq:proof-action-entropy}
\end{equation}
Let $p_z$ be the law of $U$ and $p^{\max}_z=\max_u p_z(u)$ its largest atom. Two elementary steps convert the entropy floor \eqref{eq:proof-action-entropy} into a floor on $D_z$.

First, the collision probability is controlled by the largest atom alone: $\sum_u p_z(u)^2\le p^{\max}_z\sum_u p_z(u)=p^{\max}_z$, so
\begin{equation}
D_z=1-\sum_u p_z(u)^2\ \ge\ 1-p^{\max}_z .
\label{eq:proof-collision}
\end{equation}
Second, the largest atom is controlled by the entropy. Among distributions on $K_z$ symbols whose largest atom equals $p\ge1/K_z$, entropy is maximized by placing the remaining mass uniformly on the other $K_z-1$ symbols: the unconstrained maximizer of $-\sum_{u\ne u^\star}p_z(u)\log p_z(u)$ subject to $\sum_{u\ne u^\star}p_z(u)=1-p$ is the uniform vector $(1-p)/(K_z-1)$, and it satisfies the active constraint $(1-p)/(K_z-1)\le p$ precisely because $p\ge1/K_z$. Hence
\begin{equation}
H(U\mid Z=z)\ \le\ -p\log p-(1-p)\log\tfrac{1-p}{K_z-1}\Big|_{p=p^{\max}_z}=F_{K_z}(p^{\max}_z).
\nonumber
\end{equation}
Differentiating, $F_K'(p)=\log\frac{1-p}{p(K-1)}$, which vanishes at $p=1/K$ and is negative for $p>1/K$, so $F_K$ decreases strictly from $F_K(1/K)=\log K$ to $F_K(1)=0$ and $F_K^{-1}:[0,\log K]\to[1/K,1]$ is well defined and decreasing. Applying it to the display above gives $p^{\max}_z\le F_{K_z}^{-1}\big(H(U\mid Z=z)\big)$, and combining with \eqref{eq:proof-collision}, \eqref{eq:proof-action-entropy} and the fact that $H(U\mid Z=z)\le\log K_z$ always,
\begin{equation}
D_z\ \ge\ 1-F_{K_z}^{-1}\big(H(U\mid Z=z)\big)\ \ge\ 1-F_{K_z}^{-1}\big(\underline H_z(h_z)\big)=\underline D_z(h_z),
\nonumber
\end{equation}
the second inequality because $F_{K_z}^{-1}$ is decreasing and $\underline H_z(h_z)\le H(U\mid Z=z)$.

For independent $U,U'\sim p_z$ we have $\E[(Q_z(U)-Q_z(U'))^2\mid Z=z]=2\Var[Q_z(U)\mid Z=z]=2C_z$, and the squared difference vanishes whenever $U=U'$, so conditioning on the event $U\ne U'$ gives the identity
\begin{equation}
2C_z=D_z\,\kappa_z .
\label{eq:criticality-identity}
\end{equation}
With $\kappa_z\ge\kappa_0$ this yields $C_z\ge(\kappa_0/2)\underline D_z(h_z)$. Both bounds are nondecreasing in $h_z$: $\underline H_z(\cdot)$ is nondecreasing and $F_{K_z}^{-1}$ is decreasing, so their composition enters $\underline D_z$ with a net increase. Non-vacuity is immediate, since $F_{K_z}^{-1}(H)<1$ for every $H>0$, so $\underline D_z(h_z)>0$ exactly when $L_{\min}h_z>\log M_z$. \hfill$\square$

\paragraph{Comparison with the Pinsker route.} The more familiar argument compares $p_z$ to the uniform $q_z$, for which $\mathrm{KL}(p_z\|q_z)=\log K_z-H(U\mid Z=z)$; Pinsker's inequality and $\|v\|_2\le\|v\|_1$ then give $\|p_z-q_z\|_2^2\le2\big(\log K_z-H(U\mid Z=z)\big)$, and the identity $\sum_up_z(u)^2=K_z^{-1}+\|p_z-q_z\|_2^2$ yields $D_z\ge\big[1-K_z^{-1}-2(\log K_z-H(U\mid Z=z))\big]_+$. This is positive only when $H(U\mid Z=z)>\log K_z-\tfrac12(1-K_z^{-1})$. Since the entropy can never exceed $\log K_z$, the bound is informative on an interval of width less than $\tfrac12$ nat abutting the maximum, and is silent everywhere else, which is why we do not use it. It is not uniformly dominated: inside that interval it can exceed \eqref{eq:diversity-lb}, by at most $0.062$ at $K_z=2$ and by less as $K_z$ grows ($0.026$ at $K_z=10$, $0.006$ at $K_z=50$). Taking the larger of the two is therefore valid and marginally stronger, and preserves monotonicity because both are nondecreasing in $h_z$; we state only \eqref{eq:diversity-lb} because the improvement is confined to the near-uniform regime that the certificate is not needed for. For calibration, at $K_z=10$ and $H(U\mid Z=z)=1$ nat, \eqref{eq:diversity-lb} certifies $D_z\ge0.217$ against a true value of $0.382$ at the extremal distribution, while the Pinsker bound certifies nothing.

\subsection{The MSE-optimal mixing weight}
\label{app:proof-mse}

\begin{proposition}[NLL-calibrated MSE-optimal mixing]
\label{prop:mse}
Under Assumptions~\ref{ass:canonical}--\ref{ass:two}, write $y_z=m_z/(m_z-1)$. The conditional mean-squared errors of the two estimators relative to $A_{i,z}^{\star}$ are
\begin{equation}
M_S(z)=y_z\sigma_z^2,
\qquad
M_E(z)=\tau_z^2+\eta_z^2y_zC_z ,
\label{eq:two-mse}
\end{equation}
the conditional risk of the mixture $A_i(\lammix)=\lammix\widetilde A_i^{S}+(1-\lammix)\Aep_i$ is $R_z(\lammix)=\lammix^2M_S(z)+(1-\lammix)^2M_E(z)$, and it is minimized at $\lammix_z^{\star}=M_E(z)/(M_E(z)+M_S(z))$, which is nondecreasing in $C_z$ and in $m_z$. Moreover, with $\underline M_E(z;h)=\tau_z^2+\eta_z^2y_z\underline C_z(h)$,
\begin{equation}
\lammix_z^{\star}\ \ge\ \underline\lammix_z(h_z):=\frac{\underline M_E(z;h_z)}{\underline M_E(z;h_z)+M_S(z)},
\label{eq:lambda-lb}
\end{equation}
and $\underline\lammix_z(h)$ is nondecreasing in $h$ when the remaining per-state quantities are held fixed.
\end{proposition}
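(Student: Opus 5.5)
The plan is to compute both conditional mean-squared errors directly from Assumption~\ref{ass:two}, conditioning on $Z=z$ throughout. I then read off the mixture risk and its minimizer from the orthogonality structure, and finally substitute the certificate of Proposition~\ref{prop:nllcrit}.

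\textbf{Step-estimator error.} Write $\widetilde A_i^{S}-A_{i,z}^{\star}=\varepsilon_i-\frac{1}{m_z-1}\sum_{j\ne i}\varepsilon_j$. The $\varepsilon_j$ are conditionally independent, mean zero, with common variance $\sigma_z^2$, so the second moment is
\[
\sigma_z^2\Bigl(1+\tfrac{1}{m_z-1}\Bigr)=y_z\sigma_z^2 .
\]
Independence from $\{U_j\}$ makes this error uncorrelated with any function of the actions, in particular with $A^{\star}_{i,z}$.

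\textbf{Episode-estimator error.} Here $\Aep_i-A^{\star}_{i,z}=-\eta_zA^{\star}_{i,z}+\xi^E_i$. Because $\xi^E_i$ is orthogonal to $A^{\star}_{i,z}$, the second moment splits as $\eta_z^2\E[(A^{\star}_{i,z})^2]+\tau_z^2$. I compute $\E[(A^{\star}_{i,z})^2\mid Z=z]$ as follows. The $U_j$ are i.i.d.\ draws from $p_z$, since visits to the anchor are independent samples of the old policy; this independence is implicit in the definitions of $D_z$ and $C_z$ and I would state it explicitly. Centering $Q_z$ then leaves the mean of $A^\star$ at zero, and its variance is $C_z+C_z/(m_z-1)=y_zC_z$. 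This gives $M_E(z)=\tau_z^2+\eta_z^2y_zC_z$.

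\textbf{Mixture risk and its minimizer.} Since $\lammix+(1-\lammix)=1$, the mixture error is $\lammix e_S+(1-\lammix)e_E$, where $e_S$ and $e_E$ are the two errors above. The cross term $\E[e_Se_E]$ vanishes for two reasons: $e_S$ is a function of the noises $\varepsilon$, which are independent of the $U$'s and hence of $A^\star$, and $\xi^E$ is assumed orthogonal to the step-estimation error. This cross-term step is the one I expect to need the most care, because it relies on the orthogonality clause of Assumption~\ref{ass:two} rather than on anything derived, and I would point to that clause explicitly. With the cross term gone, $R_z(\lammix)=\lammix^2M_S+(1-\lammix)^2M_E$. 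Setting the derivative to zero gives $\lammix_z^\star=M_E/(M_E+M_S)$.

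\textbf{Monotonicity.} The map $x\mapsto x/(x+M_S)$ is increasing in $x$ for $M_S>0$, and $M_E$ is increasing in $C_z$, so $\lammix_z^\star$ is nondecreasing in $C_z$. For $m_z$, write $\lammix^\star=1/(1+M_S/M_E)$ with
\[
\frac{M_S}{M_E}=\frac{y_z\sigma_z^2}{\tau_z^2+\eta_z^2y_zC_z}=\frac{\sigma_z^2}{\tau_z^2/y_z+\eta_z^2C_z}.
\]
As $m_z$ grows, $y_z$ decreases, so $\tau_z^2/y_z$ increases and the ratio decreases. Hence $\lammix^\star$ is nondecreasing in $m_z$. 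The degenerate case $M_S=M_E=0$ should be excluded or handled by convention.

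\textbf{Lower envelope.} Proposition~\ref{prop:nllcrit} gives $C_z\ge\underline C_z(h_z)$, so $M_E(z)\ge\underline M_E(z;h_z)$. Applying the same increasing map yields \eqref{eq:lambda-lb}. Monotonicity of $\underline\lammix_z$ in $h$ follows by composing three nondecreasing maps: $\underline C_z$ in $h$ (Proposition~\ref{prop:nllcrit}), $\underline M_E$ in $\underline C_z$, and $x\mapsto x/(x+M_S)$, with $\sigma_z^2,\tau_z^2,\eta_z,m_z$ held fixed.
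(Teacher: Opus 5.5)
Your proposal is correct and follows the paper's proof essentially step for step. You use the same leave-one-out error decompositions to get $M_S(z)=y_z\sigma_z^2$ and $M_E(z)=\tau_z^2+\eta_z^2y_zC_z$, and you kill the cross term with the same two facts: the $\varepsilon_j$ are independent of the actions, and the orthogonality clause of Assumption~\ref{ass:two} holds. The envelope then follows, as in the paper, by substituting $C_z\ge\underline C_z(h_z)$ into an increasing map.

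The differences are cosmetic. You get monotonicity in $m_z$ from the rewriting $M_S/M_E=\sigma_z^2/(\tau_z^2/y_z+\eta_z^2C_z)$, whereas the paper uses partial derivatives in $C_z$ and $y_z$. You also state explicitly the conditional i.i.d.\ assumption on the $U_j$, which the paper uses without stating it.
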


\begin{proof}
By Assumption~\ref{ass:two}, $\widetilde A_i^{S}-A_{i,z}^{\star}=\varepsilon_i-\frac{1}{m_z-1}\sum_{j\ne i}\varepsilon_j$, which is conditionally mean zero, and by conditional independence of the $\varepsilon_j$,
\begin{equation}
M_S(z)=\sigma_z^2+\frac{\sigma_z^2}{m_z-1}=\frac{m_z}{m_z-1}\sigma_z^2=y_z\sigma_z^2 .
\label{eq:proof-ms}
\end{equation}
The implemented mean-norm contrast satisfies $\hat g_i-\frac{1}{m_z}\sum_j\hat g_j=\frac{m_z-1}{m_z}\widetilde A_i^{S}$, a deterministic groupwise rescaling by a factor in $[\tfrac12,1)$ that is not absorbed by the global $w$ but does not affect the statements below, which are made for $\widetilde A_i^{S}$. For the episode level, $\Aep_i-A_{i,z}^{\star}=-\eta_zA_{i,z}^{\star}+\xi_i^{E}$. Writing $Y_i=Q_z(U_i)$, which are conditionally i.i.d.\ with variance $C_z$,
\begin{equation}
\E\big[(A_{i,z}^{\star})^2\mid Z=z\big]=\Var\Big(Y_i-\tfrac{1}{m_z-1}\textstyle\sum_{j\ne i}Y_j\Big)=\frac{m_z}{m_z-1}C_z=y_zC_z ,
\nonumber
\end{equation}
so the orthogonality of $\xi^{E}$ to $A^{\star}$ gives $M_E(z)=\tau_z^2+\eta_z^2y_zC_z$. The cross term in the risk of the mixture vanishes: the step error is a linear combination of the $\varepsilon_j$, which are independent of the $U_j$ with zero mean and therefore orthogonal to $A_{i,z}^{\star}$, while $\xi^{E}$ is orthogonal to the step error by assumption. Hence $R_z(\lammix)=\lammix^2M_S(z)+(1-\lammix)^2M_E(z)$, a strictly convex quadratic whenever $M_E(z)+M_S(z)>0$, whose stationary point is $\lammix_z^{\star}=M_E(z)/(M_E(z)+M_S(z))$.

For the monotonicity claims, substituting \eqref{eq:two-mse},
\begin{equation}
\frac{\partial\lammix_z^{\star}}{\partial C_z}=\frac{\eta_z^2y_z^2\sigma_z^2}{\big[\tau_z^2+y_z(\eta_z^2C_z+\sigma_z^2)\big]^2}\ \ge\ 0,
\qquad
\frac{\partial\lammix_z^{\star}}{\partial y_z}=-\frac{\tau_z^2\sigma_z^2}{\big[\tau_z^2+y_z(\eta_z^2C_z+\sigma_z^2)\big]^2}\ \le\ 0,
\label{eq:proof-dcrit}
\end{equation}
and $dy_z/dm_z=-1/(m_z-1)^2<0$, so $\lammix_z^{\star}$ is nondecreasing in $m_z$: repeated visits to the same anchor state make the step contrast more reliable and thus worth more weight. Finally, Proposition~\ref{prop:nllcrit} gives $C_z\ge\underline C_z(h_z)$, so the first derivative in \eqref{eq:proof-dcrit} implies $\lammix_z^{\star}\ge\underline\lammix_z(h_z)$, and $\underline\lammix_z(h)$ inherits monotonicity in $h$ from $\underline C_z(h)$.
\end{proof}

\begin{remark}[What the certificate does and does not give]
\label{rem:gap}
Inequality~\eqref{eq:lambda-lb} bounds $\lammix_z^{\star}$ from below by a quantity that increases with the expected NLL; it does not assert that $\lammix_z^{\star}$ itself increases with $h_z$. The latter holds under the sufficient condition that $\sigma_z^2$, $\tau_z^2$, $\eta_z$ and $m_z$ do not vary with $h_z$ across the states being compared, since then $\lammix_z^{\star}$ is a nondecreasing function of $C_z$ alone and $C_z$ inherits the ordering of its certified floor. Without that condition the monotone gate of \eqref{eq:lambda} should be read as tracking a certified floor on the optimal weight rather than the optimum itself. Proposition~\ref{prop:dom}, by contrast, requires no such condition.
\end{remark}

\begin{remark}[Correlated estimators]
\label{rem:corr}
The orthogonality of $\xi^{E}$ to the step error is the load-bearing half of Assumption~\ref{ass:two}, and it is the half most likely to fail: $\Aep$ and $\hat g_i$ are built from overlapping reward realizations, since the terminal reward enters both, so a positive covariance is the expected case rather than a pathology. If the two conditional estimation errors have covariance $\chi_z$ rather than zero, the risk acquires a cross term and the minimizer becomes
\begin{equation}
\lammix_z^{\star}=\frac{M_E(z)-\chi_z}{M_E(z)+M_S(z)-2\chi_z},
\nonumber
\end{equation}
which lies in $[0,1]$ exactly when $\chi_z\le\min\{M_E(z),M_S(z)\}$; the denominator is the variance of the difference of the two errors and is therefore positive unless they coincide almost surely, but the numerator condition is a genuine restriction, and outside it the constrained optimum sits at an endpoint. Differentiating, $\partial\lammix_z^{\star}/\partial\chi_z=(M_E-M_S)/(M_E+M_S-2\chi_z)^2$, so correlation pushes the optimal weight away from $\tfrac12$ and toward whichever estimator is already favoured. Since $\chi_z>0$ in practice, the uncorrelated form used above understates how far apart the per-state optima lie, which makes Proposition~\ref{prop:dom} conservative rather than optimistic. Cross-fitting anchor groups is one way to reduce $\chi_z$ in an implementation.
\end{remark}

\begin{remark}[What advantage normalization does to the argument]
\label{rem:norm}
The analysis compares $M_S(z)$ and $M_E(z)$ on a common scale, which is what makes their ratio meaningful. Mean-and-std normalization, used on ALFWorld, rescales each estimator by its own group standard deviation and therefore removes exactly that information, driving $\lammix_z^{\star}$ toward $\tfrac12$ uniformly in $z$. Read literally, the theory then predicts a smaller spread of per-state optima under mean-and-std norm than under mean-norm, and by Proposition~\ref{prop:dom} a smaller gain from adapting. The measured gains point that way, being larger at both scales on the mean-norm benchmark (WebShop, $+13.6$ and $+8.4$) than on the mean-and-std-norm one (ALFWorld, $+10.2$ and $+2.4$), but the two benchmarks differ in far more than their normalization, so this is a consistency check and not a test. We set the convention per benchmark for comparability with \citet{feng2025gigpo} rather than because the theory prefers either.
\end{remark}

\subsection{Proof of Proposition~\ref{prop:dom}}
\label{app:proof-dom}
Expanding the quadratic $R_z$ of Proposition~\ref{prop:mse} around its minimizer gives $R_z(\lammix)=R_z(\lammix_z^{\star})+\big(M_E(z)+M_S(z)\big)(\lammix-\lammix_z^{\star})^2$, since $R_z''=2(M_E(z)+M_S(z))$ is constant. The excess term is nonnegative, so $\E_Z[R_Z(\lammix_Z^{\star})]\le\E_Z[R_Z(\lammix_0)]$ for every fixed $\lammix_0$, and minimizing the right-hand side over $\lammix_0\in[0,1]$ gives \eqref{eq:excess-risk}. Equality requires $\lammix_0=\lammix_z^{\star}$ for almost every state with $M_E(z)+M_S(z)>0$, so the inequality is strict whenever the per-state optimum is not almost surely constant. \hfill$\square$

\subsection{Finite-sample interpretation of the NLL proxy}
\label{app:finite-sample}
It is worth locating the implemented score relative to the population quantity $h_z$ on two points. The first is the length normalization of \eqref{eq:nll}: the unnormalized sum satisfies $\E\big[\sum_j-\log\pi_{\theta_{\mathrm{old}}}(a_t^{j}\mid s_t,a_t^{<j})\big]=\mathcal H\big[\pi_{\theta_{\mathrm{old}}}(\cdot\mid s_t)\big]$ exactly, whereas the length-normalized $\Hstep_t$ satisfies $\E[\Hstep_t]=\mathcal H[\pi_{\theta_{\mathrm{old}}}(\cdot\mid s_t)]/L$ only when the action length is a constant $L$; for random lengths the two differ by the covariance between $|a_t|^{-1}$ and the accumulated surprisal. We use the normalized form regardless, because the score must compare actions of different lengths, and the theory of Section~\ref{sec:theory} is written directly in terms of $\ell_i$, so no step depends on the unnormalized interpretation.

The second point concerns averaging within an anchor group. Because $\Hstep_t$ is computed from the action the rollout actually sampled, the weight $\lammix_t$ it induces is a function of that action, which is why the reweighting of Section~\ref{sec:mixing} is not an unbiased rescaling of the policy gradient. The anchor grouping supplies a clean remedy at no extra cost. For visit $i$ in a group of size $m_z\ge2$, the leave-one-out average
\begin{equation}
\widehat h_z^{(-i)}=\frac{1}{m_z-1}\sum_{j\in\mathcal G_z,\,j\ne i}\ell_j
\label{eq:loo-h}
\end{equation}
satisfies $\E[\widehat h_z^{(-i)}\mid Z=z]=h_z$ with variance $\Var(\ell\mid Z=z)/(m_z-1)$, and, being built from the other visits to the same anchor state, is conditionally independent of $a_t$ given $Z=z$. Scoring the step by $\widehat h_z^{(-i)}$ in place of $\Hstep_t$ therefore restores an action-independent reweighting and additionally reduces the variance of the score. What it gives up is resolution: every visit in a group receives nearly the same score, so within-group contrasts in criticality disappear, and singleton groups have no leave-one-out average at all. GACA uses the individual $\ell_i$ for this reason. We flag the debiased variant because it isolates the two things the score is doing, and because the choice between them is an empirical question we have not settled.

\section{Training Configuration and Hyperparameters}
\label{app:hyper}
Table~\ref{tab:setup} lists the training configuration shared by all methods, aligned with the public GiGPO recipes, and Table~\ref{tab:hyper} lists GACA's own defaults together with the ranges explored in Section~\ref{sec:ablations}. The parameters in Table~\ref{tab:hyper} are the only additions over the GiGPO baseline; Figure~\ref{fig:lambdalaw} shows the mixing law they parameterize.

\begin{table}[t]
\caption{Shared training configuration, aligned with the public GiGPO recipes. Every difference between the compared methods is confined to the advantage estimator.}
\label{tab:setup}
\centering
\small
\setlength{\tabcolsep}{10pt}
\renewcommand{\arraystretch}{1.1}
\begin{tabular}{@{}lcc@{}}
\toprule
\textbf{Setting} & \textbf{ALFWorld} & \textbf{WebShop} \\
\midrule
Backbone & \multicolumn{2}{c}{Qwen2.5-Instruct~\citep{qwen2025}, 1.5B and 7B} \\
Group size $G$ & 8 & 8 \\
Training epochs & 150 & 150 \\
Learning rate & $1\times10^{-6}$ & $1\times10^{-6}$ \\
KL loss (low-variance estimator) & 0.01 & 0.01 \\
Discount $\gamma$ & 0.95 & 0.95 \\
Max environment steps & 50 & 15 \\
Max prompt length & 2048 & 4096 \\
Advantage normalization & mean-and-std-norm & mean-norm \\
Invalid-action penalty & 0.1 & 0.1 \\
\bottomrule
\end{tabular}
\end{table}

\begin{figure}[t]
\centering
\includegraphics[width=0.60\textwidth]{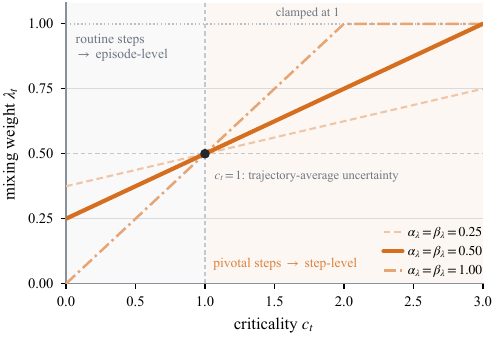}
\caption{The mixing law of equation~\plaineqref{eq:lambda} at $\lammix_0=0.5$ for three modulation strengths. The law is piecewise linear and clamped to $[0,1]$, and pivots at the trajectory-average uncertainty $\crit_t=1$: below it the estimator retreats toward the low-variance episode signal, above it toward the fine-grained step signal. Stronger modulation reaches the clamp sooner and amplifies the noise in the score, which is why the largest setting is not the best one in Table~\ref{tab:ablation}.}
\label{fig:lambdalaw}
\end{figure}

\begin{table}[t]
\caption{GACA default hyperparameters and search ranges.}
\label{tab:hyper}
\centering
\small
\renewcommand{\arraystretch}{1.15}
\begin{tabular}{@{}>{\raggedright\arraybackslash}p{0.27\textwidth}>{\raggedright\arraybackslash}p{0.15\textwidth}>{\raggedright\arraybackslash}p{0.48\textwidth}@{}}
\toprule
\textbf{Parameter} & \textbf{Default} & \textbf{Search range} \\
\midrule
criticality score & entropy & \{entropy, random permutation, uniform\} \\
fusion weight $\alpha_{\mathrm c}$ & 1.0 & \{1.0, 0.7, 0.5\} \\
base mixing $\lammix_0$ & 0.5 & \{0, 0.3, 0.5, 0.7, 0.9, 1.0\} \\
up-modulation $\alpha_\lambda$ & 0.5 & \{0, 0.25, 0.5, 1.0\}; $\alpha_\lambda=\beta_\lambda=0$ turns adaptation off \\
down-modulation $\beta_\lambda$ & 0.5 & tied to $\alpha_\lambda$ in all reported runs \\
step-advantage weight $w$ & 1.0 & not swept (fixed) \\
advantage normalization & per-benchmark & \{mean-norm, mean-and-std-norm\} \\
\bottomrule
\end{tabular}
\end{table}

\section{Implementation Details}
\label{app:impl}

\paragraph{Normalization conventions.} We follow the GiGPO/verl-agent convention for the group statistics in equations~\plaineqref{eq:episode}--\plaineqref{eq:step}: they are computed over all step-rows of a group, where every step of a trajectory contributes one row carrying that trajectory's return, and the standard deviation is the unbiased estimate with divisor $N-1$ for $N=\sum_\tau T_\tau$ rows. Equation~\plaineqref{eq:episode} states the per-trajectory population form for readability, and the two differ in two ways worth naming. The divisor gives a factor $\sqrt{N/(N-1)}$, which is negligible at these group sizes. More substantively, the row-wise mean is length-weighted, $\mu_g=\sum_\tau T_\tau\Rtraj_\tau/\sum_\tau T_\tau$, so when episode lengths are unequal it is pulled toward the returns of long trajectories; on ALFWorld, where failures run to the step limit and successes terminate early, this tilts the baseline toward failed trajectories and slightly inflates the advantage of short successes. This is a property of the convention we inherit rather than of GACA, and it applies identically to the GRPO and GiGPO baselines.

\paragraph{Singleton step-group fallback.} An anchor-state step-group with a single member admits no within-group contrast, and emitting a zero advantage would drop the step from learning altogether. We instead normalize that step's discounted return against the statistics of its \emph{prompt group}: $\Ast_t=\hat g_t-\mu_{\mathrm{prompt}}$ in mean-norm mode and $(\hat g_t-\mu_{\mathrm{prompt}})/(\sigma_{\mathrm{prompt}}+\varepsilon)$ in mean-and-std-norm mode, with the prompt-group statistics taken over all steps sharing the prompt index. This keeps unique states informative while remaining consistent with the group-relative normalization of equations~\plaineqref{eq:episode}--\plaineqref{eq:step}.

\paragraph{Reward-gated criticality.} When $\alpha_{\mathrm c}<1$ we compute, per trajectory, the entropy scores $\hat e_t=\Hstep_t/\overline{\Hstep}$ and the reward-change scores $\hat d_t=\Delta_t/\overline{\Delta}$ with $\Delta_t=|\hat g_t-\hat g_{t-1}|$ and $\Delta_1=0$, fuse them as in \eqref{eq:fuse}, and apply the normalization of \eqref{eq:crit} to the fused score. At $\alpha_{\mathrm c}=1$ the procedure reduces exactly to the pure-entropy detector.

\paragraph{Per-step advantage scalar.} Every quantity entering \eqref{eq:final} is a per-step scalar: $\Ast_t$ and $\crit_t$ are computed per step, and the mixed $\Afin_t$ is broadcast back to the tokens of $a_t$ through the response mask. Where an advantage has already been broadcast, its per-step scalar is recovered by averaging over the step's valid response tokens.

\paragraph{Ablation controls.} The \emph{uniform} control assigns every step its trajectory-average score, which makes $\crit_t\equiv1$ and freezes $\lammix_t$ at $\lammix_0$; it is therefore identical to the fixed two-level estimator and serves as the internal consistency check reported in Table~\ref{tab:ablation}. The \emph{random} control permutes the entropy scores across the steps of each trajectory, preserving the multiset of mixing weights and hence the distribution of advantage magnitudes, and changing only which step receives which weight. Together the two controls separate the effect of \emph{having} a varying weight from the effect of placing it correctly.

\section{Computational Cost}
\label{app:cost}
GACA adds no parameters, no rollouts and no forward passes: the per-token log-probabilities that equation~\plaineqref{eq:nll} averages are already materialized for the importance ratio in equation~\plaineqref{eq:surrogate}, and everything downstream of them is elementwise arithmetic on per-step scalars. Table~\ref{tab:cost} confirms this on the setting where the horizon is longest. The two components GACA adds over GiGPO cost $0.09$\,s per iteration together, against an iteration dominated by rollout and policy update, so the end-to-end overhead over GRPO is under a fifth of a percent and peak memory is unchanged.

\begin{table}[t]
\caption{Per-iteration wall-clock breakdown on ALFWorld with Qwen2.5-1.5B-Instruct, averaged over 20 training iterations on the same hardware. Rows above the rule are shared by all three methods; rows below are the additions each method makes to the advantage computation. Peak memory is identical because no method introduces a critic or any other parameters.}
\label{tab:cost}
\centering
\small
\setlength{\tabcolsep}{9pt}
\renewcommand{\arraystretch}{1.12}
\begin{tabular}{@{}lccc@{}}
\toprule
\textbf{Component (s / iteration)} & \textbf{GRPO} & \textbf{GiGPO} & \textbf{GACA} \\
\midrule
Environment rollout & \multicolumn{3}{c}{$318.6$} \\
Old- and reference-policy log-probabilities & \multicolumn{3}{c}{$23.1$} \\
Clipped policy update & \multicolumn{3}{c}{$29.5$} \\
\midrule
Anchor-state grouping (hash lookups) & --- & $0.01$ & $0.01$ \\
Step-level advantage & --- & $0.53$ & $0.53$ \\
NLL proxy and criticality score & --- & --- & $0.07$ \\
Mixing weight and advantage blend & --- & --- & $0.02$ \\
\midrule
\rowcolor{gacashade}
Total & $371.2$ & $371.7$ & $371.8$ \\
\rowcolor{gacashade}
Overhead vs.\ GRPO & --- & $+0.15\%$ & $+0.17\%$ \\
\rowcolor{gacashade}
Peak GPU memory (GB) & $62.4$ & $62.4$ & $62.4$ \\
\bottomrule
\end{tabular}
\end{table}

\section{Additional Tables}
\label{app:extra}

\begin{table}[t]
\caption{Benchmarks used in our evaluation. Both use the agent-environment interface of Section~\ref{sec:mdp} with a sparse, episode-level success reward and a small invalid-action penalty.}
\label{tab:benchmarks}
\centering
\small
\setlength{\tabcolsep}{7pt}
\renewcommand{\arraystretch}{1.15}
\begin{tabular}{@{}llccc@{}}
\toprule
\textbf{Benchmark} & \textbf{Setting} & \textbf{Max steps} & \textbf{Eval tasks} & \textbf{Metric} \\
\midrule
ALFWorld & text-based embodied household tasks & 50 & 128 & success rate \\
WebShop  & grounded web shopping and navigation & 15 & 128 & success rate \\
\bottomrule
\end{tabular}
\end{table}

\begin{table}[t]
\caption{GACA training diagnostics logged throughout RL, discussed in Section~\ref{sec:diagnostics}.}
\label{tab:diag}
\centering
\small
\setlength{\tabcolsep}{7pt}
\renewcommand{\arraystretch}{1.15}
\begin{tabular}{@{}l p{0.36\textwidth} c c@{}}
\toprule
\textbf{Diagnostic} & \textbf{Meaning} & \textbf{ALFWorld} & \textbf{WebShop} \\
\midrule
$\crit_t$: mean\,/\,std     & spread of the criticality score across steps          & $1.00\,/\,0.31$ & $1.00\,/\,0.27$ \\
$\lammix_t$: mean\,/\,std   & realized per-step mixing weight                       & $0.50\,/\,0.08$ & $0.50\,/\,0.07$ \\
clamped steps               & share of steps with $\lammix_t\in\{0,1\}$             & $<\!0.01$       & $<\!0.01$ \\
step-advantage coverage     & share of steps in a non-singleton anchor group        & $0.86$          & $0.79$ \\
\bottomrule
\end{tabular}
\end{table}

\end{document}